\documentclass[11pt]{article}
\usepackage[margin=1in]{geometry}
\usepackage[T1]{fontenc}
\usepackage[utf8]{inputenc}
\usepackage{lmodern}
\usepackage{microtype}
\usepackage{amsmath,amssymb,amsthm,bm,mathtools}
\usepackage{booktabs}
\usepackage{graphicx}
\usepackage{float}
\usepackage{array}
\usepackage{multirow}
\usepackage{xurl}
\usepackage{hyperref}
\usepackage{enumitem}
\usepackage{caption}
\usepackage{subcaption}
\usepackage{longtable}
\usepackage{makecell}
\hypersetup{colorlinks=true,linkcolor=blue,citecolor=blue,urlcolor=blue}
\newtheorem{proposition}{Proposition}

\newcommand{\VR}{\mathrm{VR}}
\newcommand{\Dgm}{\mathrm{Dgm}}
\newcommand{\softmax}{\operatorname{softmax}}
\newcommand{\zscore}{\operatorname{zscore}}
\newcommand{\RMS}{\operatorname{RMS}}
\newcommand{\RMSE}{\operatorname{RMSE}}
\newcommand{\AET}{\mathrm{AET}}

\newcommand{\KH}{\mathrm{KH}}

\newcommand{\eps}{\varepsilon}

\title{Global and Local Topology-Aware Attention with Persistent Homology and Euler Biases for Time-Series Forecasting}
\author{Usef Faghihi, Amir Saki 
	\\Department of Mathematics and computer science
	\\University of Quebec Trois Rivieres
	\\usef.faghihi@uqtr.ca, amir.saki@uqtr.ca}
\date{}

\begin{document}
\maketitle

\begin{abstract}
Scientific time series often encode predictive geometric structure, including connectivity, cycles, shell-like geometry, directional changes, and nonlinear neighborhoods, that standard dot-product attention does not explicitly represent. We introduce a topology-aware attention framework that injects such structure directly into attention logits using persistent homology ($H_0$--$H_2$), anchored Euler characteristic transforms, and kernel-Hilbert channels. A validation-gated local residual further captures local topological signals, including a Zeng-style local $H_0$ component as a special case, only when held-out validation data support the correction. Capped exact Vietoris--Rips computations and smooth topological surrogates are evaluated under a no-leakage protocol with train-only calibration, validation-only selection, and test-only reporting.

We evaluate guarded topology-aware variants across three architecture families: lightweight attention/Ridge, \texttt{PatchTSTForRegression}, and \texttt{TimeSeriesTransformerForPrediction}. Experiments include synthetic benchmarks that isolate higher-order topology and real datasets covering CO$_2$, S\&P 500 return-window geometry, and NASA IMS bearing degradation. The primary audit uses matched paired comparisons across seven dataset units, three random seeds, and three chronological splits, giving 63 paired units per architecture and 189 paired units overall. Topology-aware models show positive paired effects when geometry is predictive, but the magnitude is heterogeneous across datasets and architectures. Lightweight attention/Ridge improves in 46 of 63 units (mean relative RMSE reduction $12.5\%$, paired randomization $p=7.2\times10^{-4}$); \texttt{PatchTSTForRegression} improves in 33 units and retains the baseline in 20 additional units ($23.5\%$, $p=3.5\times10^{-5}$); and \texttt{TimeSeriesTransformerForPrediction} improves in 47 units ($47.8\%$, $p<10^{-4}$). Gains are strongest on datasets with stable geometric structure and more conservative on weakly geometric tasks. The results support topology as a validation-selected, architecture-compatible inductive bias rather than as a uniformly beneficial replacement for standard temporal modeling.

\end{abstract}

\section*{Introduction}
Transformer attention models pairwise token interactions through scaled dot products \cite{vaswani2017attention}. For an input token window $X=(x_1,\ldots,x_N)$, classical attention is
\begin{equation}
A_{\mathrm{std}}(X)=\softmax\left(\frac{QK^\top}{\sqrt{d_h}}\right).
\label{eq:std_attention}
\end{equation}
This mechanism is expressive, but it has no explicit reason to preserve connected components, cycles, shell-like structure, directional Euler signatures, or nonlinear metric neighborhoods in the token cloud. Scientific time series can exhibit precisely these structures: seasonal trajectories can form loops, volatility regimes can appear as changes in return-window geometry, and bearing degradation can evolve through sparse changes in vibration features rather than smooth mean drift.

Zeng et al. \cite{zeng2021topological} showed that local persistent-homology information can be useful for time-series forecasting. Their approach is fundamentally local: topological summaries are computed from sliding windows and enter the model as a feature stream. The method presented here keeps that insight but changes the representation. First, a global module injects topology directly into the attention logits, so topology changes token interactions rather than only adding external features. Second, a separate local residual recovers the strength of local topological attention and extends it beyond scalar $H_0$ by using local $H_1$, $H_2$, and kernel-Hilbert topology. The local residual is validation-gated and does not alter the global selector.

This distinction is scientifically important. A global persistent-homology summary does not necessarily contain all local persistent-homology summaries, because persistence diagrams are many-to-one maps. Conversely, local scalar $H_0$ can be very efficient when the signal is locally coherent, but it cannot represent local loops, voids, or nonlinear Hilbert-space geometry that are invisible in one-dimensional path filtrations. The proposed framework therefore uses topology at two scales: global biases for broad geometric structure and local residuals for local coherent or higher-topological changes.

The contributions are:
\begin{enumerate}[leftmargin=*]
    \item A global topology-aware attention score that adds multiscale $H_0$, $H_1$, $H_2$, anchored Euler-transform, and RKHS/Hilbert topological channels to attention logits.
    \item A validation-gated local Hilbert/persistent-homology residual that contains a Zeng-style local $H_0$ baseline as a special case and extends it to higher local topology.
    \item A no-leakage experimental protocol with chronological splits, train-only scalers and topology calibration, validation-only selection, and test-only final reporting.
    \item Controlled synthetic and real-data experiments that clarify when local topology, global topology, and Hilbert metrics are useful.
\end{enumerate}

\section*{Related Work}
\paragraph*{Topological data analysis in machine learning.}
Persistent homology and topological data analysis summarize multiscale structure in data by tracking the births and deaths of connected components, loops, and higher-dimensional cavities across a filtration \cite{edelsbrunner2010computational,carlsson2009topology}. Because persistence diagrams are not directly Euclidean objects, a large body of work has developed stable representations that can be combined with statistical and machine-learning models, including persistence landscapes \cite{bubenik2015statistical}, persistence images \cite{adams2017persistence}, and persistence kernels \cite{reininghaus2015stable}. Directional topological summaries provide a complementary route for encoding geometry. The persistent homology transform and Euler characteristic transform scan a shape across directions and have been studied as informative shape descriptors \cite{turner2014pht,curry2022many}. Recent Euler-characteristic work is particularly relevant here: Hacquard and Lebovici show that Euler-characteristic profiles and hybrid transforms can give computationally economical topological descriptors \cite{hacquard2024euler}, Roell and Rieck introduce a differentiable Euler characteristic transform layer for graph and point-cloud learning \cite{roell2024differentiable}, von Rohrscheidt and Rieck extend ECT ideas to local graph neighborhoods \cite{vonrohrscheidt2025dissecting}, and Marsh and Beers establish stability and inference results for the ECT \cite{marsh2026stability}. These developments motivate the use of an anchored Euler-transform channel as a directional complement to persistent homology.

\paragraph*{Topological and geometric biases in attention.}
The standard Transformer computes interactions from scaled dot-product similarity \cite{vaswani2017attention}. Subsequent variants have shown that attention can be improved by adding structural information to the attention mechanism, for example through relative positional representations, segment-level recurrence with relative encodings, or explicit linear attention biases \cite{shaw2018self,dai2019transformerxl,press2022train}. Recent topological transformer work has extended this idea in several directions. Topology-Informed Graph Transformer uses topological positional embeddings and global attention for graph discrimination \cite{choi2024tigt}; the Cellular Transformer reformulates self- and cross-attention over cell complexes using incidence relations and topological positional encodings \cite{ballester2024cellular}; linear transformer topological masking parameterizes graph-topology masks while preserving linear attention scaling \cite{reid2024topologicalmasking}; and TopoFormer converts multiscale topological structure into a sequence representation for protein--ligand prediction \cite{chen2024topoformer}. The present work follows the same broad principle of structurally biasing attention, but it targets time-series forecasting and constructs additive logit-bias matrices from persistent homology, anchored Euler signatures, and kernel-Hilbert geometry. Kernel methods provide the nonlinear geometric foundation for this component: positive-definite kernels define implicit feature maps and associated Hilbert-space distances, allowing distance-based algorithms to operate in nonlinear feature spaces \cite{scholkopf2002learning,scholkopf2000distances}. In our setting, Vietoris--Rips topology is computed under such a kernel-induced Hilbert distance and injected directly into the attention matrix.

\paragraph*{Topology in time-series forecasting.}
In time-series analysis, persistent homology is commonly applied to sliding-window or delay-coordinate embeddings, where loops and related features can represent periodicity, recurrence, or regime structure \cite{perea2015sliding}. Topological machine-learning approaches have also been developed for multivariate time series by converting windows into point clouds and comparing their persistence diagrams \cite{wu2022topological}. The most direct point of comparison for this manuscript is the topological attention model of Zeng et al. \cite{zeng2021topological}, which uses local persistent-homology information within a forecasting horizon as a complementary topological signal. More recently, CrossTopoNet has incorporated persistent-homology-derived latent topological representations into a cross-attention forecasting framework \cite{lin2025crosstoponet}. These studies support the relevance of topology for forecasting, but they typically use topology as an auxiliary or separately fused representation. The present methodology instead injects multiscale $H_0$, $H_1$, $H_2$, anchored Euler, and RKHS topological information directly into the attention logits, thereby modifying the token-interaction graph itself. At the same time, the validation-gated local residual preserves the strength of local topological forecasting methods by containing a Zeng-style local $H_0$ component as a special case and extending it to local $H_1$, $H_2$, and Hilbert persistent-homology channels.

\section*{No-Leakage Protocol}
The evaluation protocol is part of the method because topology can otherwise introduce leakage through calibration or selection. For every real time-series dataset, windows are split chronologically into train, validation, and test sets. Feature scalers are fit on the training windows only. AET directions, AET thresholds, persistent-homology normalizers, RKHS distance scales, local attention projections, and local normalizers are fit only on training windows. Topology is computed only from the input window
\begin{equation}
X_t=(x_{t-L+1},\ldots,x_t),
\label{eq:input_window}
\end{equation}
never from the future target. Ridge penalties, topology modes, topological strengths, dynamic temperatures, and local/global blend weights are selected on validation data only. The test split is used once for final reporting.

This protocol also fixes the fairness of the Zeng-style reference baseline. The local $H_0$ baseline is computed from the original scaled input window rather than from patch-tokenized features, and it is not given the flattened raw input window as an extra regressor. This keeps the comparison focused on local $H_0$ topology rather than raw-window memorization.

\section*{Methodology}

The methodology is structured to proceed from the general architectural modification to increasingly specialized components. We first define the global topology-aware attention mechanism, which specifies how topological information is incorporated directly into the attention logits. We then introduce the directional and nonlinear topological bias terms that extend the basic persistent-homology formulation. Subsequently, we describe the adaptive temperature parameters used to learn the relative contribution of each topological channel. Finally, we present the localized residual correction, which accounts for fine-scale topological patterns not fully captured by the global attention bias. This organization separates the core attention modification from its specialized extensions and clarifies the role of the global and local components within the overall forecasting framework.
\subsection*{Global topology-aware attention mechanism}
Classical self-attention computes token interactions through scaled dot products, as in Eq.~\eqref{eq:std_attention}. This mechanism is flexible, but it has no explicit reason to preserve connected components, cyclic structure, shell-like geometry, directional Euler signatures, or nonlinear metric neighborhoods. We therefore augment the classical attention score with additive topological bias matrices:
\begin{equation}
A_{\mathrm{global}}(X)=
\softmax\left(
\frac{QK^\top}{\sqrt{d_h}}
+\sum_{k=0}^{2}\lambda_k(X)B^{H_k}(X)
+\lambda_E(X)B^{\AET}(X)
+\sum_{k=0}^{2}\lambda_{K,k}(X)B^{\KH_k}(X)
\right).
\label{eq:global_attention}
\end{equation}
Here $B^{H_0}$ is a multiscale connectivity bias, $B^{H_1}$ is a loop or cyclic bias, $B^{H_2}$ is a shell or void bias, $B^{\AET}$ is an anchored Euler-transform directional bias, and $B^{\KH_k}$ denotes persistent homology computed under a kernel-Hilbert distance. The strengths $\lambda_k(X)$, $\lambda_E(X)$, and $\lambda_{K,k}(X)$ are selected from the validation split or learned through the adaptive temperature mechanism described below. Thus, topology modifies the token-interaction graph directly at the attention-logit level rather than being appended only as an auxiliary feature stream.

For a finite token cloud $X=\{x_1,\ldots,x_N\}$ with distance $d$, the Vietoris--Rips complex at radius $\eps$ is
\begin{equation}
\VR_\eps(X,d)=
\left\{
\sigma\subseteq X:\max_{x_i,x_j\in\sigma}d(x_i,x_j)\leq \eps
\right\}.
\label{eq:rips}
\end{equation}
The lifetime summary used for a homology dimension $k$ is
\begin{equation}
S_k(X)=
\log\left(
1+
\sum_{(b_\ell,d_\ell)\in\Dgm_k(X)}
\frac{d_\ell-b_\ell}{\operatorname{median}\{d(x_i,x_j):i<j\}+\epsilon}
\right),
\qquad k\in\{0,1,2\}.
\label{eq:lifetime_summary}
\end{equation}
When the exact backend is available, capped Vietoris--Rips persistence is computed from the distance matrix using GUDHI-style Rips complexes. To keep the implementation executable without specialized dependencies and to avoid cubic blow-up on long windows, smooth distance-based surrogates are also implemented. The exact and surrogate versions are treated as candidate feature generators under the same validation-only selection protocol.

The smooth $H_0$ bias is a weighted multiscale radial affinity,
\begin{equation}
B^{H_0}_{ij}=\zscore\left[
0.50\exp\left(-\frac{d_{ij}^2}{2(0.5\sigma)^2}\right)
+0.35\exp\left(-\frac{d_{ij}^2}{2\sigma^2}\right)
+0.15\exp\left(-\frac{d_{ij}^2}{2(2\sigma)^2}\right)
\right],
\label{eq:h0_bias}
\end{equation}
where $\sigma$ is the median nonzero pairwise distance. The three scales form a compact multiscale approximation: the smallest scale emphasizes local connectedness, the middle scale captures the dominant neighborhood radius, and the largest scale stabilizes the bias when neighborhoods are sparse.

The smooth $H_1$ bias uses two-hop cycle-closing evidence. Define
\begin{equation}
A^{(\eps)}_{ij}=\sigma_{\mathrm{logistic}}\left(\frac{\eps-d_{ij}}{\tau_\eps}\right),
\label{eq:soft_adj}
\end{equation}
and
\begin{equation}
C^{(\eps)}_{ij}=\left[\frac{1}{N-2}\sum_k A^{(\eps)}_{ik}A^{(\eps)}_{kj}\right]\left(1-A^{(\eps)}_{ij}\right).
\label{eq:cycle_closing}
\end{equation}
This score is large when $i$ and $j$ share two-hop paths but are not directly connected, which is a local signature of cycle closure. The implemented loop bias averages across the scales
\begin{equation}
\mathcal E=\{0.70\sigma,\sigma,1.40\sigma\},
\qquad
B^{H_1}_{ij}=\zscore\left[\frac{1}{|\mathcal E|}\sum_{\eps\in\mathcal E}C^{(\eps)}_{ij}\right].
\label{eq:h1_bias}
\end{equation}
The $H_2$ term is a shell proxy combining centered radii and local sparsity:
\begin{equation}
B^{H_2}_{ij}=\zscore\left[
\exp\left(-\frac{(r_i-r_j)^2}{2s_r^2}\right)\frac{\rho_i+\rho_j}{2}
\right],
\label{eq:h2_bias}
\end{equation}
where $r_i$ is the centered token radius, $s_r$ is a robust radius scale, and $\rho_i$ is a local sparsity score. This proxy is not claimed to equal exact $H_2$ persistence; it is a smooth, low-cost bias that is validated against exact summaries when available.

\subsection*{Computational complexity}
The complexity of the proposed topology channels is reported per input window with $N$ tokens and token dimension $p$. The base attention score requires $O(N^2d_h)$ arithmetic and $O(N^2)$ memory for the attention matrix. The topology-aware implementation adds either capped exact persistence or smooth surrogate biases before the softmax.

For exact Vietoris--Rips persistence, the pairwise distance matrix costs $O(N^2p)$ time and $O(N^2)$ space. Exact persistence is then computed only on a capped subset of size $c\leq 28$ with a maximum-edge filter. If homology is computed through dimension $q=2$, the Rips construction may require simplices through dimension $q+1=3$, giving
\[
S_c=\sum_{r=1}^{q+2}\binom{c}{r}=O(c^4)
\]
simplices in the worst case. Standard boundary-matrix reduction is $O(S_c^3)$ time and $O(S_c^2)$ space. Without the cap, this worst-case bound becomes combinatorial in $N$ and is the main reason that exact persistence is not used indiscriminately on long windows. With fixed $c$, the exact persistence cost is bounded per window, but it remains more expensive than the smooth biases.

The smooth $H_0$ and $H_2$ biases require the dense distance matrix and elementwise transformations, giving $O(N^2p+N^2)$ time and $O(N^2)$ space. The smooth $H_1$ cycle-closing surrogate uses dense two-hop products across a small fixed set of radii; with the three radii used here, the naive complexity is $O(N^3)$ time and $O(N^2)$ space. The AET channel with $R$ directions and $Q$ thresholds costs $O(RNp+RQN^2)$ time and $O(N^2+RQN)$ space if memberships are materialized, or $O(N^2)$ auxiliary space if thresholds are streamed. The RKHS channel adds $O(N^2p)$ time to form kernel-induced distances and then uses either the same capped exact persistence path or the same smooth-bias path. Thus, except for the capped exact persistence and the $H_1$ surrogate, the implemented topology channels remain quadratic in the token count, matching the memory order of ordinary softmax attention.

\subsection*{Directed and nonlinear topological biases}
Standard persistent homology is stable and geometrically informative, but it can be insensitive to directional redistributions that preserve Betti numbers. This limitation is important in time series such as degradation trajectories, where the direction of structural change can carry predictive information. The Anchored Euler-transform bias (AET) complements persistent homology by encoding directional changes in sublevel-set Euler structure. The anchored Euler term scans train-only directions and thresholds. For directions $v_r$, thresholds $a_{rq}$, and temperature $\tau$, define
\begin{equation}
m_{irq}=\sigma\left(\frac{a_{rq}-\langle x_i,v_r\rangle}{\tau}\right).
\label{eq:aet_membership}
\end{equation}
The differentiable graph-Euler statistic is
\begin{equation}
\widetilde\chi_{rq}(X)=\sum_i m_{irq}-\sum_{i<j}a_{ij}m_{irq}m_{jrq}.
\label{eq:graph_euler}
\end{equation}
The token-level Euler contribution is
\begin{equation}
c_{irq}=m_{irq}\left(1-\sum_j a_{ij}m_{jrq}\right),
\label{eq:aet_contrib}
\end{equation}
and the AET bias is
\begin{equation}
B^{\AET}_{ij}(X)=\frac{1}{RQ}\sum_{r,q}c_{irq}c_{jrq}.
\label{eq:aet_bias}
\end{equation}

Euclidean persistent homology may also miss nonlinear similarities. To address this limitation, the Hilbert-space channel replaces Euclidean distances with a kernel-induced metric. The RKHS channel uses the positive-definite Gaussian kernel
\begin{equation}
	k_\ell(x_i,x_j)
	=
	\exp\left(
	-\frac{\|x_i-x_j\|^2}{2\ell^2}
	\right).
	\label{eq:rkhs_gaussian_kernel}
\end{equation}
The corresponding implicit feature map $\phi$ satisfies
\begin{equation}
	k(x_i,x_j)
	=
	\langle \phi(x_i),\phi(x_j)\rangle_{\mathcal H}.
	\label{eq:rkhs_kernel_inner}
\end{equation}
The induced Hilbert distance can then be computed without explicitly constructing $\phi$:
\begin{equation}
	d_{\mathcal H}(x_i,x_j)^2
	=
	k(x_i,x_i)+k(x_j,x_j)-2k(x_i,x_j).
	\label{eq:rkhs_hilbert_distance}
\end{equation}
Vietoris--Rips persistence is computed using $d_{\mathcal H}$ in place of the Euclidean distance, yielding topology in the associated RKHS. The bandwidth $\ell$ controls the scale of the geometry: small $\ell$ emphasizes local neighborhoods, while large $\ell$ captures more global relationships. Hilbert-space topology therefore provides a nonlinear bridge between local and global geometry, although exact recovery of local window topology still requires the explicit local cover described later.

\subsection*{Adaptive topological temperatures}
The predictive value of a given topology channel can vary across datasets and regimes. For example, $H_0$ connectivity may be useful in locally coherent financial windows, while $H_1$ loops or RKHS geometry may be more informative in seasonal or degradation trajectories. Fixed topological coefficients are therefore structurally restrictive. The dynamic variant learns nonnegative effective temperatures:
\begin{equation}
S_m(X)=\frac{QK^\top}{\sqrt{d_h}}+
\sum_{c\in\mathcal C_m}\eta_c B^c(X),
\qquad
\eta_c=\operatorname{softplus}(\alpha_c),
\label{eq:dynamic_eta}
\end{equation}
where $\mathcal C_m$ denotes the channels included in topology mode $m$. The parameters $\alpha_c$ are trained on the training split, early-stopped on validation data, and then used as features for the same validation-selected forecasting head. This lets the model decide how much attention weight to route through each topological channel rather than forcing a fixed strength for all datasets.

\subsection*{Local residual correction and preservation principle}
Global persistence diagrams are many-to-one maps and do not inherently summarize every local persistent-homology variation. There can be two input windows $X$ and $X'$ with identical global topology,
\begin{equation}
D_{\mathrm{global}}(X)=D_{\mathrm{global}}(X'),
\label{eq:global_many_to_one_equal}
\end{equation}
but different local diagram collections,
\begin{equation}
\{D_m^0(X)\}_{m=1}^{M}\neq
\{D_m^0(X')\}_{m=1}^{M}.
\label{eq:local_diagrams_differ}
\end{equation}
Consequently, no predictor depending only on $D_{\mathrm{global}}(X)$ can reproduce every possible local persistent-homology representation. A model that preserves the global topology-aware mechanism while capturing localized coherent signals must therefore retain an explicit local cover.

At the attention-logit level, the clean extension is
\begin{equation}
A_{\mathrm{aug}}(X)=
\softmax\left(S_{\mathrm{global}}(X)+\alpha_{\mathrm{loc}}(X)S_{\mathrm{loc}}(X)\right),
\label{eq:augmented_attention}
\end{equation}
where
\begin{equation}
S_{\mathrm{global}}(X)=
\frac{QK^\top}{\sqrt{d_h}}
+\sum_{k=0}^{2}\lambda_k(X)B^{H_k}(X)
+\lambda_E(X)B^{\AET}(X)
+\sum_{k=0}^{2}\lambda_{K,k}(X)B^{\KH_k}(X),
\label{eq:sglobal}
\end{equation}
and
\begin{equation}
S_{\mathrm{loc}}(X)=
\sum_{m=1}^{M}
\sum_{c\in\mathcal C_{\mathrm{loc}}}
\gamma_{m,c}(X)\,
M_m B_m^{c}(X_m)M_m^\top.
\label{eq:sloc}
\end{equation}
Here $X_m$ is the $m$-th local subwindow, $M_m$ is the binary mask that embeds the local bias back into the full token window, and
\begin{equation}
\mathcal C_{\mathrm{loc}}=
\{H_0,H_1,H_2,\KH_0,\KH_1,\KH_2\}.
\label{eq:local_channels}
\end{equation}
The local scalar $H_0$ channels recover Zeng-style local information, while the additional channels add local loop, void, and Hilbert-space geometry.

This formulation preserves the current global method exactly. If
\begin{equation}
\alpha_{\mathrm{loc}}(X)=0,
\label{eq:alpha_zero_attention}
\end{equation}
then
\begin{equation}
A_{\mathrm{aug}}(X)=A_{\mathrm{global}}(X).
\label{eq:augmented_preserves_global_attention}
\end{equation}
Therefore, if $\mathcal F_{\mathrm{global}}$ is the predictor class induced by the global module and $\mathcal F_{\mathrm{aug}}$ is the predictor class after adding the local residual, then
\begin{equation}
\mathcal F_{\mathrm{global}}\subseteq \mathcal F_{\mathrm{aug}}.
\label{eq:global_containment}
\end{equation}
This containment protects representational capacity but does not by itself guarantee lower test error. The validation gate decides whether the extra local module is supported by held-out validation data.

Local PH contrasts provide the gate for the local score. Let $C_m^{(c)}(X)$ summarize the normalized change in channel $c$ around local window $m$. A contrast-gated version is
\begin{equation}
\gamma_m(X)=
\softmax_m\left(
u^\top
\left[
C_m^{(H_0)},C_m^{(H_1)},C_m^{(H_2)},
C_m^{(\KH_0)},C_m^{(\KH_1)},C_m^{(\KH_2)}
\right]
\right).
\label{eq:local_gamma}
\end{equation}
Equivalently, $\gamma_{m,c}$ in Eq.~\eqref{eq:sloc} may be channel-specific. The purpose is to give more local attention weight to subwindows where topology changes sharply. The implementation uses the same idea in a representation-level form: the normalized local contrast matrix is added to local attention logits, and contrast statistics are concatenated to the local Hilbert/PH representation. The global method remains preserved because the validation gate can set $\alpha_{\mathrm{loc}}=0$.

The local Hilbert/persistent-homology residual generalizes prior local topological approaches. Zeng-style methods use local $H_0$ connectivity summaries computed from sliding windows. However, scalar local $H_0$ summaries cannot represent local loops, voids, or nonlinear Hilbert-space geometry. The proposed residual computes richer local topological features over a cover of the input window, including scalar $H_0$ diagrams from both the original and negated one-dimensional path filtrations, higher-dimensional persistent-homology diagrams, and RKHS-based diagrams. These local features are vectorized, mapped through a local attention projection, and added to the global prediction only when validated by held-out performance.

Let $\mathcal U(X)=\{U_{m,s}\}$ be a cover of the input window by local subwindows and selected larger-scale subwindows. For each local element, the module computes
\begin{equation}
	D_{m,s}^{0,+},\quad
	D_{m,s}^{0,-},\quad
	D_{m,s}^{1},\quad
	D_{m,s}^{2},\quad
	D_{m,s}^{\KH_0},\quad
	D_{m,s}^{\KH_1},\quad
	D_{m,s}^{\KH_2}.
	\label{eq:local_diagrams}
\end{equation}
Here $D_{m,s}^{0,+}$ and $D_{m,s}^{0,-}$ are scalar local $H_0$ diagrams obtained from the original and negated one-dimensional path filtrations, respectively. The diagrams $D_{m,s}^{1}$ and $D_{m,s}^{2}$ are computed from pairwise Euclidean distances, while $D_{m,s}^{\KH_0}$, $D_{m,s}^{\KH_1}$, and $D_{m,s}^{\KH_2}$ are computed from RKHS-induced distances.

Each diagram is vectorized using finite-lifetime statistics, including the largest lifetimes, total persistence, mean lifetime, standard deviation, maximum lifetime, and number of finite bars. Denote this vectorization by $\varphi$. The local feature vector is then
\begin{equation}
	\Phi_{m,s}(X)=
	\left[
	\varphi(D_{m,s}^{0,+}),
	\varphi(D_{m,s}^{0,-}),
	\varphi(D_{m,s}^{1}),
	\varphi(D_{m,s}^{2}),
	\varphi(D_{m,s}^{\KH_0}),
	\varphi(D_{m,s}^{\KH_1}),
	\varphi(D_{m,s}^{\KH_2}),
	\operatorname{stats}(U_{m,s})
	\right].
	\label{eq:local_phi}
\end{equation}
A train-only attention projection maps these local vectors to a local representation
\begin{equation}
	R_{\mathrm{loc}}(X)
	=
	\operatorname{Attn}\left(\Phi_{1}(X),\ldots,\Phi_M(X)\right).
	\label{eq:local_attention_rep}
\end{equation}

The module also uses a normalized local persistent-homology contrast. For two local subwindows $U_a$ and $U_b$ in channel $k$, define
\begin{equation}
	C_{ab}^{(k)}(X)=
	\frac{
		\left\|\varphi(D_a^{(k)})-\varphi(D_b^{(k)})\right\|_{\RMS}
	}{
		\sqrt{
			\tfrac{1}{2}
			\left(
			\|\varphi(D_a^{(k)})\|_{\RMS}^2
			+
			\|\varphi(D_b^{(k)})\|_{\RMS}^2
			\right)
		}
		+\epsilon
	}.
	\label{eq:local_contrast}
\end{equation}
This contrast is observational and input-window-only; it is not an interventional causal estimand. Its role is to make the residual sensitive to changes in local topology across the cover.

The final prediction is obtained by a validation-gated residual correction:
\begin{equation}
	\widehat y_{\mathrm{final}}(X)
	=
	\widehat y_{\mathrm{global}}(X)
	+
	\alpha_{\mathrm{loc}}\widehat r_{\mathrm{local}}(X),
	\qquad
	\widehat r_{\mathrm{local}}(X)
	=
	\widehat y_{\mathrm{local}}(X)
	-
	\widehat y_{\mathrm{global}}(X).
	\label{eq:guarded_residual}
\end{equation}
Equivalently,
\begin{equation}
	\widehat y_{\mathrm{final}}(X)
	=
	(1-\alpha_{\mathrm{loc}})\widehat y_{\mathrm{global}}(X)
	+
	\alpha_{\mathrm{loc}}\widehat y_{\mathrm{local}}(X).
	\label{eq:guarded_blend}
\end{equation}
The coefficient $\alpha_{\mathrm{loc}}$ is selected only on the validation set. Let $\alpha^\star$ be the validation-optimal blend over a fixed grid. The reported model uses
\begin{equation}
	\alpha_{\mathrm{loc}}=
	\begin{cases}
		\alpha^\star,
		&
		\RMSE_{\mathrm{val}}(\mathrm{global+local})
		<
		\RMSE_{\mathrm{val}}(\mathrm{global})
		-
		\delta_{\mathrm{loc}}\max\{1,\RMSE_{\mathrm{val}}(\mathrm{global})\},
		\\
		0,
		&
		\text{otherwise.}
	\end{cases}
	\label{eq:residual_guard}
\end{equation}
Therefore, when the local residual is not validated, $\alpha_{\mathrm{loc}}=0$ and the selected global prediction is preserved exactly.

\begin{proposition}[Containment of the local $H_0$ baseline]
Let $\mathcal F_Z$ denote the class of predictors generated by a Zeng-style local $H_0$ representation using local scalar path diagrams from $X$ and $-X$. Let $\mathcal F_L$ denote the class generated by the local Hilbert/persistent-homology representation in Eq.~\eqref{eq:local_phi}. Then
\begin{equation}
\mathcal F_Z\subseteq \mathcal F_L.
\label{eq:containment}
\end{equation}
\end{proposition}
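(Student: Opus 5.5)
The argument is a coordinate-projection containment, in the same spirit as the preservation argument that gave Eq.~\eqref{eq:global_containment}. I would first fix what "class of predictors generated by a representation" means: a predictor in $\mathcal F_Z$ has the form $f=g\circ\Psi_Z$, where $\Psi_Z(X)$ collects the vectorized local scalar path diagrams $\varphi(D^{0,+}_{m,s})$ and $\varphi(D^{0,-}_{m,s})$ over the cover $\mathcal U(X)$, and $g$ ranges over an admissible head class $\mathcal G$. Similarly, a predictor in $\mathcal F_L$ has the form $h\circ\Psi_L$, with $\Psi_L(X)=(\Phi_{m,s}(X))_{m,s}$ from Eq.~\eqref{eq:local_phi} and $h\in\mathcal G$. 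The statement implicitly requires both classes to use the same cover, the same vectorization $\varphi$, and the same head family $\mathcal G$. I would state this assumption explicitly, together with the only property of $\mathcal G$ that is needed: it is closed under precomposition with coordinate projections. Linear or Ridge heads, attention projections followed by linear readouts, and MLPs all have this property, because a projection can be absorbed into the first linear layer by zeroing the corresponding weight columns.

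The key step is to exhibit a coordinate projection $\pi$ with $\Psi_Z=\pi\circ\Psi_L$. By construction, the first two blocks of each $\Phi_{m,s}(X)$ are exactly $\varphi(D^{0,+}_{m,s})$ and $\varphi(D^{0,-}_{m,s})$. These are computed from the original and negated one-dimensional path filtrations of the same scaled input window, which is the Zeng-style representation required by the no-leakage protocol. So $\pi$ keeps these two blocks for every $(m,s)$ and discards the $H_1$, $H_2$, $\KH_k$ and $\operatorname{stats}$ blocks. Then, for any $g\in\mathcal G$, set $h=g\circ\pi$. We have $h\in\mathcal G$ by the closure assumption, and $h\circ\Psi_L=g\circ\Psi_Z$. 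This gives $\mathcal F_Z\subseteq\mathcal F_L$.

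If the local representation passes through the attention map $R_{\mathrm{loc}}=\operatorname{Attn}(\Phi_1,\ldots,\Phi_M)$ in Eq.~\eqref{eq:local_attention_rep}, I would carry out the same argument one level down. Choose the query, key and value projections to read only the $H_0^{\pm}$ coordinates, which means zero rows in the projection matrices. Then $\operatorname{Attn}$ applied to $\Phi$ coincides with the Zeng-style attention applied to $\Psi_Z$. The contrast terms of Eq.~\eqref{eq:local_contrast} are additive logit inputs, so they can be switched off by setting the gate weights $u$ in Eq.~\eqref{eq:local_gamma} to zero.

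The main obstacle is not algebraic. It is making precise that the two classes use matched cover, vectorization and head family, since the statement leaves these implicit. If the Zeng baseline used a different local window length than the cover, containment would fail. I would resolve this by assuming that the cover $\mathcal U(X)$ includes the Zeng subwindows, as the construction of $\mathcal U(X)$ as a cover by local and larger-scale subwindows permits. I would also note that, as with Eq.~\eqref{eq:global_containment}, containment guarantees only representational capacity and not lower test error.
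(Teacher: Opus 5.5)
Your proposal is correct and takes essentially the same route as the paper: the paper's proof also notes that $\Phi_{m,s}(X)$ contains $\varphi(D_{m,s}^{0,+})$ and $\varphi(D_{m,s}^{0,-})$, sets the weights on all $H_1$, $H_2$, RKHS and auxiliary-statistic coordinates to zero, and copies the remaining attention and forecasting weights from the local $H_0$ baseline. Your factorization $\Psi_Z=\pi\circ\Psi_L$, together with a head class closed under precomposition by projections, is a more explicit form of that argument, and it usefully states the assumptions of a matched cover, matched $\varphi$ and matched head family that the paper leaves implicit.
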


\begin{proof}
The representation $\Phi_{m,s}(X)$ contains $\varphi(D_{m,s}^{0,+})$ and $\varphi(D_{m,s}^{0,-})$, the two diagram summaries used by the controlled local $H_0$ baseline. Set the weights on all non-$H_0$ coordinates, including $H_1$, $H_2$, RKHS coordinates, and auxiliary statistics, to zero. Choose the remaining attention and forecasting weights equal to those of the local $H_0$ baseline. The resulting predictor is exactly the baseline predictor. Therefore every predictor in $\mathcal F_Z$ can be represented in $\mathcal F_L$.
\end{proof}

Containment does not imply that the local module must beat local $H_0$ on every finite sample. It proves representational capacity. Strict improvement is possible when there exist $X$ and $X'$ such that the local $H_0$ representation is identical but local $H_1$, $H_2$, or RKHS topology differs and the target depends on that difference. This is precisely what the higher-topology stress test is designed to evaluate.

The final design gives three guarantees. First, the global method is preserved because Eq.~\eqref{eq:residual_guard} can set $\alpha_{\mathrm{loc}}=0$, exactly recovering $\widehat y_{\mathrm{global}}$. Second, the Zeng-style local $H_0$ baseline is contained because the local module can zero out all non-Zeng channels and use only local $H_0$ diagrams from $X$ and $-X$. Third, improvement over local $H_0$ is possible on data-generating processes where the target depends on local $H_1$, $H_2$, or RKHS/Hilbert topology that is not measurable from scalar local $H_0$ diagrams. The third point is conditional, not universal: validation-only selection determines whether this extra information is useful for a given dataset.

\section*{Experimental Setup}
\subsection*{Implementation details}
All experiments follow the no-leakage protocol defined above. The dataset construction is shared across the architecture families so that the same seven non-degenerate dataset-level comparisons are used for every completed model run.

The executed package contains three architecture groups. First, the original topology-aware attention forecaster is a lightweight attention summary followed by a Ridge regression head. It is evaluated with per-timestep tokens and non-overlapping patch tokens. Its global topology-aware variant adds the manuscript channels in Eq.~\eqref{eq:global_attention} to the attention logits, namely $H_0$, $H_1$, $H_2$, AET, and RKHS/Hilbert $KH_0$, $KH_1$, and $KH_2$. Its local correction follows the guarded residual in Eq.~\eqref{eq:guarded_residual}--Eq.~\eqref{eq:residual_guard}. Ridge penalties are selected from $\{0.001,0.01,0.1,1,10,50,100\}$. Static topological strengths and blend weights are selected on the validation split. Dynamic configurations learn nonnegative temperatures $\eta_c=\operatorname{softplus}(\alpha_c)$ for 16 epochs with learning rate $0.03$, weight decay $10^{-4}$, and early stopping with patience 5.

Second, the architecture-level code instantiates \texttt{PatchTSTForRegression} in two forms: an ordinary baseline and a topology-aware version. In the topology-aware form, the same input-window-only topological channels in Eq.~\eqref{eq:global_attention} are calibrated from the training split and added as attention-logit biases over the patch-token sequence. The topology-aware PatchTST run also retains the local Hilbert/PH residual as a validation-gated correction, so that the reported topology-aware PatchTST column follows the same global-bias and local-residual design as the manuscript.

Third, the architecture-level code instantiates \texttt{TimeSeriesTransformerForPrediction} in two forms: an ordinary baseline and a topology-aware version. The topology-aware version applies the same topological bias construction to the encoder self-attention logits over the past-value window, with the same nonnegative channel-temperature parameterization and the same validation-gated residual convention. Thus, each Transformer architecture is evaluated once without topology and once with the manuscript topological methodology.

Exact persistent homology is used when computationally feasible, with a cap of 28 points and a maximum-edge quantile of 0.60; otherwise, the smooth surrogate biases in Eq.~\eqref{eq:h0_bias}--Eq.~\eqref{eq:h2_bias} are used under the same validation-only selection protocol. Local subwindows use length 8 and stride 4. Each local diagram is summarized by finite-lifetime statistics including the top 4 finite lifetimes per channel, and local features are projected through a 16-dimensional train-only attention layer.

\subsection*{Datasets}
The dataset descriptions are given once here and apply to all three architecture groups. Performance is assessed on seven non-degenerate dataset-level comparisons. Three controlled synthetic benchmarks isolate higher-dimensional topology, and four real-data comparisons test seasonal, financial, and bearing-degradation settings. A third IMS folder is audited but excluded because its target sanity checks indicate a near-degenerate target.
\begin{enumerate}[leftmargin=*]
    \item \textbf{Higher-topology stress test.} Each input is a two-dimensional point cloud with matched coordinate marginals. In the positive regime,
    \begin{equation}
    X_i=(\cos\theta_i,\sin\theta_i)+\epsilon_i,
    \label{eq:stress_loop}
    \end{equation}
    while in the negative regime the second coordinate is independently permuted,
    \begin{equation}
    \widetilde X_i=(\cos\theta_i,\sin\theta_{\pi(i)})+\epsilon_i.
    \label{eq:stress_scramble}
    \end{equation}
    Both regimes are randomly rotated and token-permuted. Scalar coordinate marginals remain similar, but the positive regime preserves a global loop ($H_1$ structure) that is destroyed in the negative regime.
    \item \textbf{Cyclic $H_1$ benchmark.} Multivariate windows are generated from sinusoidal trajectories with randomized amplitudes, phases, and frequencies. The target is the next value of the primary sinusoidal component.
    \item \textbf{Shell/void $H_2$ benchmark.} Windows are sampled either from a noisy spherical shell or a filled ball. The target is a binary indicator of the shell regime.
    \item \textbf{Monthly CO$_2$.} The bundled monthly atmospheric CO$_2$ record is converted to windows containing the CO$_2$ value and seasonal sine/cosine coordinates. The target is the next monthly CO$_2$ value.
    \item \textbf{S\&P 500 volatility.} The FRED S\&P 500 index is converted to log returns
    \begin{equation}
    r_t=\log(P_t)-\log(P_{t-1}),
    \label{eq:sp500_returns}
    \end{equation}
    and the target is 5-day-ahead annualized realized volatility,
    \begin{equation}
    Y_t=\sqrt{\frac{1}{5}\sum_{j=1}^{5}r_{t+j}^2\times 252}.
    \label{eq:sp500_target}
    \end{equation}
    Topology is computed exclusively from past return-window features.
    \item \textbf{NASA IMS bearing sets 1 and 2.} The IMS bearing data are analyzed per test because channel layouts differ. Set 1 uses bearing-channel groups $(1,2)$, $(3,4)$, $(5,6)$, and $(7,8)$; set 2 uses one channel per bearing. For each bearing $b$, the health indicator is
    \begin{equation}
    \mathrm{HI}_b(t)=0.55z_{\mathrm{RMS},b}(t)+0.25z_{\mathrm{STD},b}(t)+0.20z_{\mathrm{KURT},b}(t),
    \label{eq:ims_hi}
    \end{equation}
    followed by median smoothing, rolling averaging, nonnegative trend extraction, and next-snapshot prediction. The third IMS folder is skipped because the target sanity check indicates an extremely low-variance target, making RMSE comparisons misleading.
\end{enumerate}

The architecture-level dataset-loading logs report the following input-window tensors: higher-topology stress $[300,32,2]$, cyclic $H_1$ $[260,24,3]$, shell/void $H_2$ $[260,24,3]$, monthly CO$_2$ $[260,30,3]$, S\&P 500 volatility $[1192,40,6]$, IMS set 1 $[1114,24,45]$, and IMS set 2 $[960,24,21]$. The same seven datasets are therefore available to the lightweight, PatchTST, and TimeSeriesTransformer architecture sections.

\subsection*{Baselines and metrics}
For the lightweight attention/Ridge architecture, the completed baselines are classical attention without topological biases and a controlled Zeng-style local $H_0$ baseline. The Zeng-style row is an adaptation for controlled comparison, not a claim to reproduce every training detail of the original paper. It receives only scalar local $H_0$ summaries, including summaries from the original scaled input windows and their negations. It does not receive local $H_1$, local $H_2$, AET, RKHS distances, flattened raw windows, or patch-expanded features. This isolates the effect of local $H_0$ topology.

For \texttt{PatchTSTForRegression} and \texttt{TimeSeriesTransformerForPrediction}, the comparison is the ordinary architecture versus the same architecture with the topological logit-bias and guarded-residual mechanisms described above. RMSE is the primary metric and MAE values are stored in the corresponding result folders. The architecture-level result folders contain completed aggregate CSVs for all seven non-degenerate datasets.

The statistical audit is applied after all validation decisions are fixed. For each architecture, the primary inferential analysis uses $7$ datasets, $3$ random seeds, and $3$ chronological splits, giving $63$ paired units that compare the no-topology baseline with the topology-aware guarded version of the same architecture. The audit reports mean relative RMSE reduction, bootstrap confidence intervals over paired units, improved/worsened/tied counts, paired effect size $d_z$, and paired sign-flip/randomization tests on the signed RMSE improvements. A single current chronological split is also reported descriptively to show dataset-level behavior under the same validation-only selection rule.

\section*{Results}

\subsection*{Evaluation protocol}
The Results section follows the no-leakage protocol defined above. For the deep architectures, the no-topology and topology-aware members of each pair were trained from matched initial weights with matched training seeds and data-loader shuffling. The final guarded prediction was selected by validation RMSE, so the topology-aware mechanism is evaluated as a validation-selected inductive bias rather than as an unconditional replacement for the base architecture.

\subsection*{Representative chronological split}
Table~\ref{tab:architecture_comparison} reports the representative current chronological split for all seven dataset units and three architecture families. On this split, the lightweight attention/Ridge model improved on six of seven datasets. \texttt{PatchTSTForRegression} improved on the higher-topology and shell datasets and retained the baseline on four additional datasets. \texttt{TimeSeriesTransformerForPrediction} improved on five datasets, tied on one, and worsened on the cyclic-loop benchmark.

\begin{table}[H]
\centering
\caption{Current matched chronological-split RMSE comparison. Guarded topology values include the baseline-level validation guard; equal values indicate that the baseline prediction was retained. Lower RMSE is better.}
\label{tab:architecture_comparison}
\scriptsize
\resizebox{\linewidth}{!}{%
\begin{tabular}{lcccccc}
\toprule
Dataset & \multicolumn{2}{c}{Lightweight/Ridge} & \multicolumn{2}{c}{PatchTST} & \multicolumn{2}{c}{TimeSeriesTransformer} \\
\cmidrule(lr){2-3}\cmidrule(lr){4-5}\cmidrule(lr){6-7}
 & Baseline & Guarded & Baseline & Guarded & Baseline & Guarded \\
\midrule
Higher topology (H$_2$ gap) & 0.3329 & 0.0344 & 0.4970 & 0.0383 & 0.5082 & 0.0383 \\
Cyclic topology (H$_1$ loop) & 0.5956 & 0.3602 & 0.0957 & 0.0957 & 0.1180 & 0.1498 \\
Shell topology (H$_2$) & 0.1306 & 0.1202 & 0.4985 & 0.1809 & 0.4228 & 0.1795 \\
CO$_2$ monthly & 0.8742 & 0.6350 & 0.8488 & 1.0258 & 11.2122 & 1.0256 \\
S\&P/FRED return shape & 0.1522 & 0.1148 & 0.0486 & 0.0486 & 0.0612 & 0.0612 \\
IMS bearing 1 & 0.0253 & 0.0255 & 0.0246 & 0.0246 & 0.0455 & 0.0355 \\
IMS bearing 2 & 0.6448 & 0.5083 & 0.5383 & 0.5383 & 1.1721 & 0.6951 \\
\bottomrule
\end{tabular}%
}
\end{table}

The representative split shows that the topology-aware mechanism is most effective when the target is tied to stable geometric structure. The higher-topology and shell datasets improved for all three architectures. The S\&P/FRED return-shape dataset was more conservative for the deep models, with the guard retaining the no-topology baseline for both transformer architectures. The CO$_2$ series was strongly improved by the TimeSeriesTransformer variant but not by PatchTST, indicating that topology-aware gains remain architecture-dependent even when the same train/validation/test protocol is used.

\subsection*{Repeated seed/split empirical effect-size audit}
The primary inferential analysis repeated the paired comparison across three random seeds and three chronological splits for each dataset, yielding $63$ paired units per architecture. Table~\ref{tab:significance_summary} reports paired empirical effects for the topology-aware guarded model versus the corresponding no-topology baseline. Positive relative RMSE reduction indicates lower RMSE for the topology-aware guarded model. The bootstrap interval summarizes uncertainty in the mean relative reduction, and the paired randomization test is applied to signed RMSE improvements.

\begin{table}[H]
\centering
\caption{Repeated matched seed/split empirical effect-size audit for topology-aware guarded models versus their no-topology baselines. Each architecture contributes 63 paired units: seven datasets, three seeds, and three chronological splits.}
\label{tab:significance_summary}
\scriptsize
\resizebox{\linewidth}{!}{%
\begin{tabular}{lcccccc}
\toprule
Architecture & Units & Improved / worsened / tied & Mean relative RMSE reduction & 95\% bootstrap CI & Paired effect size $d_z$ & Randomization $p$ \\
\midrule
Lightweight attention/Ridge & 63 & 46 / 17 / 0 & 12.5\% & [-5.3, 27.7]\% & 0.44 & $7.2\times10^{-4}$ \\
PatchTSTForRegression & 63 & 33 / 10 / 20 & 23.5\% & [14.1, 33.2]\% & 0.55 & $3.5\times10^{-5}$ \\
TimeSeriesTransformerForPrediction & 63 & 47 / 6 / 10 & 47.8\% & [38.4, 57.0]\% & 0.47 & $<10^{-4}$ \\
\bottomrule
\end{tabular}%
}
\end{table}

The repeated audit shows statistically significant signed paired effects for all three architectures, but the effects are not uniform across datasets. The lightweight attention/Ridge model improved in 46 of 63 paired units and had a mean relative RMSE reduction of 12.5\% with a two-sided paired randomization $p=7.2\times10^{-4}$. The 95\% bootstrap interval for relative reduction crosses zero, reflecting heterogeneous relative magnitudes across datasets, but the signed paired audit remains strongly positive. \texttt{PatchTSTForRegression} improved in 33 units, worsened in 10 units, and retained the baseline in 20 units; its mean relative reduction was 23.5\% with a 95\% bootstrap interval of [14.1, 33.2]\% and a paired randomization $p=3.5\times10^{-5}$. \texttt{TimeSeriesTransformerForPrediction} produced the largest mean relative reduction, 47.8\%, with a 95\% bootstrap interval of [38.4, 57.0]\% and a two-sided paired randomization probability below $10^{-4}$. This estimate should be interpreted together with the dataset-level results below because large reductions on strongly geometric tasks contribute substantially to the mean.

\subsection*{PatchTSTForRegression dataset-level behavior}
\texttt{PatchTSTForRegression} showed a significant aggregate effect, but the dataset-level pattern was heterogeneous. Table~\ref{tab:patchtst_dataset_results} shows that the strongest PatchTST gains occurred on the higher-order synthetic topology task and the shell topology task, where the guarded model improved in every repeated unit. The IMS bearing datasets showed smaller positive effects. In contrast, the S\&P/FRED return-shape, CO$_2$, and cyclic-loop tasks were weak or negative for PatchTST, and the validation guard frequently retained the no-topology baseline.

\begin{table}[H]
\centering
\caption{PatchTSTForRegression repeated seed/split dataset-level audit. Values are averaged over three seeds and three chronological splits for each dataset.}
\label{tab:patchtst_dataset_results}
\scriptsize
\resizebox{\linewidth}{!}{%
\begin{tabular}{lccccc}
\toprule
Dataset & Units & Improved / worsened / tied & Baseline RMSE & Guarded RMSE & Mean relative reduction \\
\midrule
Higher topology (H$_2$ gap) & 9 & 9 / 0 / 0 & 0.4958 & 0.0324 & 93.5\% \\
IMS bearing 1 & 9 & 6 / 1 / 2 & 0.0248 & 0.0208 & 12.4\% \\
IMS bearing 2 & 9 & 4 / 1 / 4 & 0.3837 & 0.3646 & 3.4\% \\
S\&P/FRED return shape & 9 & 1 / 1 / 7 & 0.0683 & 0.0686 & -0.7\% \\
CO$_2$ monthly & 9 & 2 / 4 / 3 & 1.0178 & 1.0614 & -7.6\% \\
Cyclic topology (H$_1$ loop) & 9 & 2 / 3 / 4 & 0.0897 & 0.0921 & -3.4\% \\
Shell topology (H$_2$) & 9 & 9 / 0 / 0 & 0.4998 & 0.1639 & 67.2\% \\
\bottomrule
\end{tabular}%
}
\end{table}

These results indicate that PatchTST is compatible with the topology-aware guard, but its benefit depends on how well patch-level temporal representations preserve the relevant geometry. Higher-order shape structure produced large and reliable improvements, whereas weakly geometric or low-amplitude return-shape tasks produced small or negative effects. The validation guard is therefore an essential part of the reported PatchTST behavior: in 20 of 63 PatchTST units, the final selected prediction was the no-topology baseline.

\subsection*{Geometry-specific behavior}
The repeated results indicate that the strongest gains occur when the predictive signal is aligned with the persistent homology features supplied to the attention mechanism. The higher-topology synthetic task and the shell topology task improved consistently for all architectures, with PatchTSTForRegression improving in all nine repeated units for both datasets. TimeSeriesTransformerForPrediction also showed robust gains on the CO$_2$ and IMS bearing tasks, whereas the S\&P/FRED return-shape task was mostly guarded back to the baseline for the deep models. This pattern is consistent with the interpretation that topology-aware attention is most useful when the topology summary captures stable geometry that is not already represented by the temporal encoder.

Figure~\ref{fig:sp500_diag} shows the S\&P/FRED topology diagnostic. In this dataset, the topology-aware local residual did not consistently improve the deep architectures, and the baseline-level guard often selected the no-topology prediction. The diagnostic illustrates the conservative behavior of the method: topology contributes when validation evidence supports it, while the baseline is retained when the geometry channel is not predictive.

\begin{figure}[H]
  \centering
  \includegraphics[width=0.95\linewidth]{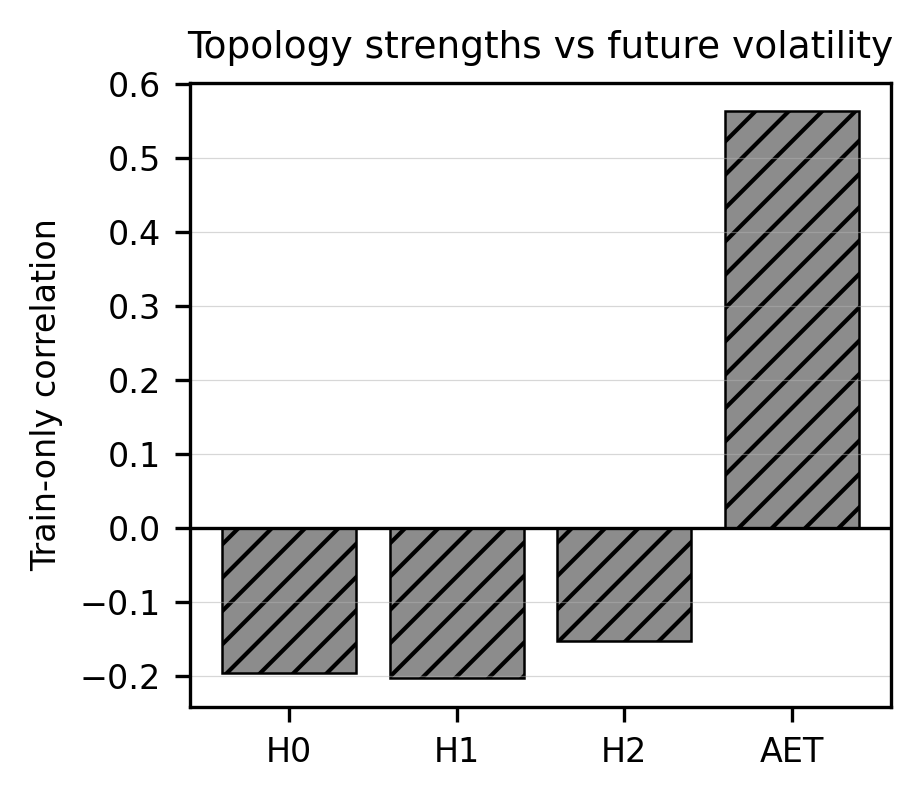}
  \caption{Diagnostic summary for the S\&P/FRED return-shape dataset. The current matched split shows that topology-aware predictions are retained only when they satisfy the validation guard; otherwise the baseline prediction is selected.}
  \label{fig:sp500_diag}
\end{figure}

Overall, the results support topology-aware guarded attention as an architecture-dependent inductive bias. The repeated audit provides evidence for positive signed paired effects in the lightweight attention/Ridge model, \texttt{PatchTSTForRegression}, and \texttt{TimeSeriesTransformerForPrediction}. The dataset-level results show that the benefit is concentrated in units with stable geometric structure and should be interpreted as a guarded, validation-selected improvement rather than as an unconditional replacement for standard temporal modeling.

\section*{Ablation and Interpretation}
The supplementary CSVs contain the completed lightweight attention/Ridge candidates: classical attention, Zeng-style local $H_0$, static topology, validation blends, dynamic $\eta$, learned-$\eta$ Ridge, RKHS-only modes, hybrid Euclidean/RKHS modes, local Hilbert/PH variants, and guarded residual rows. The Transformer architecture folders contain completed rows for \texttt{baseline\_no\_topology}, \texttt{topology\_global\_logit\_bias}, and \texttt{topology\_global\_plus\_guarded\_local\_hilbert\_ph}. The selected modes are data-dependent: the higher-topology stress test selects AET globally and local Hilbert/PH locally; CO$_2$ selects RKHS-full topology for the lightweight model; S\&P 500 selects AET in the completed lightweight run; IMS set 2 selects RKHS-full dynamic topology; and the cyclic benchmark selects a hybrid full dynamic blend. This supports the main methodological claim that topology should be chosen to match the geometry of the dataset rather than forced as a single fixed channel.

\begin{table}[H]
\centering
\small
\caption{Relative RMSE reduction (\%) for the completed lightweight attention/Ridge architecture. Positive values indicate improvement, i.e., lower RMSE.}
\label{tab:relative_reductions}
\resizebox{0.82\linewidth}{!}{%
\begin{tabular}{lccc}
\toprule
Dataset & \makecell{Local vs\\Zeng-style $H_0$} & \makecell{Global vs\\Zeng-style $H_0$} & \makecell{Global vs\\Classical} \\
\midrule
Higher-topology stress & 92.1 & 89.2 & 84.2 \\
Cyclic $H_1$ benchmark & 41.3 & 31.0 & 28.9 \\
Shell/void $H_2$ benchmark & -0.3 & 16.1 & 8.0 \\
Monthly CO$_2$ & 0.8 & 18.4 & 27.4 \\
S\&P 500 volatility & 0.0 & -30.5 & 24.6 \\
NASA IMS set 1 & 0.2 & 14.7 & -8.5 \\
NASA IMS set 2 & 0.0 & 31.5 & 47.4 \\
\midrule
Average (all rows) & 19.2 & 24.3 & 30.3 \\
\bottomrule
\end{tabular}%
}
\vspace{0.25em}
\parbox{0.82\linewidth}{\footnotesize The average is the unweighted mean across the seven non-degenerate comparisons; negative values are retained rather than selectively excluded.}
\end{table}

\section*{Discussion}
The results support a scale-sensitive view of topology in attention. Zeng-style local $H_0$ topological attention remains a strong simple baseline when the predictive signal is locally coherent. The proposed global mechanism is more expressive because it uses full-window $H_0$, $H_1$, $H_2$, AET, and RKHS topology, while the local Hilbert/PH residual preserves the local $H_0$ case through Proposition~1 and adds higher local topology when scalar connectivity is insufficient. The repeated paired audit shows that this design produces statistically significant empirical effects not only in the lightweight attention/Ridge model, but also in \texttt{PatchTSTForRegression} and \texttt{TimeSeriesTransformerForPrediction}.

The architecture-level results also show that topology should be treated as a validation-selected inductive bias. The strongest and most stable improvements occurred on the higher-topology and shell benchmarks, where the target construction directly depends on geometric structure. The S\&P/FRED return-shape task and selected PatchTST runs were more conservative, often retaining the no-topology baseline. This behavior is desirable for scientific forecasting: the method should expose geometric information when it is predictive, while preserving the ordinary temporal model when the topology channel does not improve validation performance.

The statistical evidence should therefore be read as evidence of a positive guarded effect across the repeated benchmark units, not as evidence that topology improves every dataset or every architecture instantiation. The bootstrap interval for the lightweight relative reduction crosses zero, and the larger transformer reductions are influenced by datasets where the target is explicitly geometric. This heterogeneity is consistent with the central premise of the method: topology is useful when the predictive mechanism is geometric and should be suppressed by validation when it is not.

The Hilbert/RKHS component is useful because the kernel bandwidth controls geometric scale. For small bandwidths, the Hilbert distance emphasizes local neighborhoods; for larger bandwidths, it behaves more globally. Persistent homology under the induced Hilbert distance can therefore reveal nonlinear similarity structure that Euclidean distances miss. However, RKHS topology does not by itself reproduce local window decompositions; the explicit local cover is required for the containment proof and for the validation-gated residual used in the experiments.

\section*{Limitations and Reproducibility}
Exact persistent homology can be expensive for long windows, so the implementation uses capped exact computations when available and smooth surrogates otherwise. The smooth $H_1$ and $H_2$ biases are interpretable approximations, not replacements for exact persistence in every setting; in particular, the $H_2$ term should be interpreted as a shell-sensitive proxy rather than as exact $H_2$ persistence. The dense $O(N^2)$ topology biases and the naive $O(N^3)$ smooth $H_1$ surrogate also limit direct scalability to very long sequences, and the present manuscript does not provide a systematic wall-clock benchmark against non-topological baselines. The repeated audit uses three seeds and three chronological splits across seven dataset units per architecture, which provides a stronger paired assessment than a single split but remains limited in dataset breadth. Future work should test broader public forecasting collections, additional industrial degradation datasets, larger transformer families such as Informer, Autoformer, and related long-horizon architectures, and optimized sparse or low-rank topology approximations. The controlled Zeng-style baseline is designed to isolate local $H_0$ topology fairly; it is not a full reproduction of every engineering detail in the original Zeng et al. implementation. The code and executed notebook in the package include seeds, hyperparameter grids, dataset generation routines, train-only calibration files, repeated split outputs, and all result CSVs.

Ethically, the method is intended for scientific forecasting and predictive maintenance. The main risk is over-interpreting topological gains when the dataset geometry or target construction is poorly matched. The no-leakage protocol and IMS target sanity checks are included to reduce this risk.

\section*{Conclusion}

This research presents a topology-aware attention framework for time-series forecasting that operates at both global and local scales. Unlike approaches that append topological descriptors only as ancillary features, the global module modifies the token-interaction graph by injecting persistent homology, anchored Euler-transform signatures, and kernel-Hilbert geometry directly into the attention logits. To accommodate localized coherent signals, the framework pairs this global mechanism with a validation-gated local Hilbert/PH residual that contains earlier local $H_0$ baselines as a special case while extending representational capacity to local $H_1$, $H_2$, and RKHS geometries.

The empirical results support this geometry-matched inductive bias while also showing that its value is architecture-dependent. In the repeated paired audit, topology-aware guarded configurations improved in 126 of 189 architecture--dataset--seed--split comparisons, worsened in 33, and retained the baseline in 30. The signed paired tests were positive for all three architecture families. Lightweight attention/Ridge achieved a mean relative RMSE reduction of 12.5\% across 63 paired units ($p=7.2\times10^{-4}$), \texttt{PatchTSTForRegression} achieved a 23.5\% reduction ($p=3.5\times10^{-5}$), and \texttt{TimeSeriesTransformerForPrediction} achieved a 47.8\% reduction ($p<10^{-4}$). The strongest gains occurred in domains governed by explicit geometric structure, especially the higher-topology stress test and shell/void benchmark, while weaker or low-amplitude geometric tasks more often selected the no-topology baseline.

These findings indicate that topology is best treated as a targeted inductive bias rather than a universally forced replacement for ordinary attention. Some datasets and architecture-specific runs, especially weakly geometric PatchTST and financial return-shape cases, favored the no-topology baseline. The validation-gated design is therefore central to the method: it preserves the ordinary architecture when local or global topological information is not supported by held-out data, while allowing higher-order geometry to contribute when connectivity, cycles, shells, directional Euler changes, or nonlinear metric neighborhoods are predictive.

Overall, the work establishes an architecture-compatible design principle. Because the framework acts through additive logit biases and gated residuals, it can be integrated into Transformer-style forecasting architectures without discarding standard dot-product attention. By mathematically preserving baseline capacity while exposing global and local topological structure, topology-aware attention equips predictive models to recognize geometric relationships that classical dot products are not designed to encode explicitly.

\appendix
\section*{Appendix A: Synthetic Data Generation Details}
For the higher-topology stress test, the coherent loop class is generated by Eq.~\eqref{eq:stress_loop}; the scrambled class is generated by Eq.~\eqref{eq:stress_scramble}. Both are randomly rotated, centered, standardized coordinate-wise, perturbed with small Gaussian noise, and token-permuted. The target is a noisy loop-health indicator. The construction intentionally preserves simple coordinate statistics while changing pairwise loop geometry.

For the cyclic $H_1$ benchmark, each window contains sine, cosine, and weak auxiliary coordinates with randomized amplitude, phase, and frequency. For the shell/void benchmark, shell samples use noisy unit-radius directions, while filled samples use random radial contraction. These datasets isolate whether $H_1$ and shell-like geometry can be useful in addition to scalar local $H_0$.

\section*{Appendix B: Additional Result Files}
The package contains the executed notebooks, selected predictions, RMSE bar plots, topology-strength plots, target diagnostics, and CSVs for every mode. The most important folders are:
\begin{itemize}[leftmargin=*]
    \item \texttt{all\_results/resuls topology aware/}: lightweight attention/Ridge aggregate CSVs, selected comparisons, diagnostics, and figures.
    \item \texttt{all\_results/results PatchTSTForRegression/}: PatchTST aggregate CSVs, per-dataset validation/test RMSE files, predictions, training histories, and topology-calibration files.
    \item \texttt{all\_results/results TimeSeriesTransformerForPrediction/}: TimeSeriesTransformer aggregate CSVs, per-dataset validation/test RMSE files, predictions, training histories, and topology-calibration files.
    \item \texttt{all\_results/results empirical effect size significance/}: the current-split and repeated seed/split effect-size summaries, paired units, cross-architecture tables, and interpretation files used for Table~\ref{tab:significance_summary}.
    \item the Section 6 local Hilbert/RKHS guarded-output folder retained for reproducibility.
    \item the executed notebooks with the complete implementation and outputs.
    \item \texttt{scripts/generate\_journal\_figures.py}, which regenerates the main journal-style PDF figures from the bundled CSV outputs.
\end{itemize}

\end{document}